%% file: main.tex
\documentclass{article} 
\usepackage{iclr2027_conference,times}

\input{math_commands.tex}

\usepackage{hyperref}
\usepackage{url}
\usepackage{enumitem}
\usepackage{xcolor}
\usepackage{graphicx}
\usepackage{booktabs}

\title{Single-Layer MeMo as a Randomized \\ Hamming-Kernel Classifier}

\author{Alessandro Straziota\\ 
Department of Enterprise Engineering\\
University of Rome ``Tor Vergata''\\
Rome, Italy \\
\texttt{alessandro.straziota@uniroma2.it}
}

\iclrfinalcopy 
\begin{document}

\maketitle

\begin{abstract}



\MeMo \citep{zanzotto2025memo} is a recent language-model architecture that stores associations between token contexts and next tokens in a correlation matrix memory.
In this work, we study its single-layer form and show that its ideal retrieval rule is a multiclass classifier based on the \emph{positional Hamming kernel}.
The \MeMo architecture represents both the sequence features and the output labels with Gaussian random codes.
Its score is therefore a doubly randomized sketch of the ideal classifier.
Under independent input and output codebooks, we bound the errors introduced by context sketching and output decoding, characterize their dependence on model and data parameters, and give a margin-based guarantee for recovering the ideal prediction.
Controlled simulations support the trends predicted by the analysis.
On a restricted \texttt{WikiText-2} next-token task, we compare single-layer \MeMo with classical baselines and show that it can offer a useful trade-off among predictive accuracy, memory, and throughput, particularly on a GPU, where its matrix operations can be parallelized.
\end{abstract}

\section{Introduction}
Language models can benefit from both general patterns and specific associations observed in data.
In transformer models, these associations are typically encoded in learned parameters, making them less directly accessible for inspection or editing.
This has motivated architectures that make memory an explicit operation \citep{Wu2022,pmlr-v267-berges25a,NEURIPS2025_a4ca07aa}.

Correlation matrix memories (CMMs) provide a classical mechanism for storing key-value associations in a distributed representation \citep{anderson1972simple,kohonen1972correlation}.  Given pairs of vectors $(k_j, v_j)$ that represents keys and values, respectively, a CMM stores the matrix $\sum_j k_jv_j^\top$.
Querying the memory with a key $q$ returns a linear combination of the stored values, where the coefficient of $v_j$ is the inner product $\ip{q}{k_j}$.
This simple algebra makes the storage rule transparent and permits local edits by adding or subtracting outer products.

\MeMo applies this mechanism to language modeling
\citep{zanzotto2025memo}.
A fixed-length sequence of tokens is encoded as a random vector and used as the key of a CMM, while the next token is represented as its value.
At retrieval time, stored next-token vectors are weighted by their similarity to the query sequence and decoded against the token codebook.
The original presentation observes that this operation behaves as an approximate counter and assigns partial credit to partially matching contexts.
However, this description leaves three questions open. 
What is the precise learning problem solved by the single-layer model?
What geometry does its notion of partial matching induce?
Finally, which roles are played by the different dimensions of the random representation?

We answer the first two questions by showing that the single-layer model is a randomized implementation of a \emph{positional Hamming-kernel classifier}.
For two contexts of length $h$, the underlying kernel is $K_H(s,q) = \frac{1}{h}\sum_{r=1}^{h}\ind\{s_r=q_r\}$, i.e., the fraction of positions at which the two contexts contain the same token.
For every possible next token $c$, the ideal classifier sums this similarity over all stored contexts whose target is $c$.
The prediction is the class with the largest accumulated similarity.
Thus, single-layer \MeMo can be understood as a \emph{kernel voting rule}, or equivalently, as a linear classifier whose class weights are unnormalized sums of training features.

\MeMo implements this classifier only approximately, using random codes for both context features and output labels.
Our main technical contribution is an analysis of this approximation: we bound the errors due to context sketching and output decoding, then combine these bounds with the ideal classifier’s margin to obtain an exact-recovery guarantee.
The analysis shows how representation dimensions, context length, and repeated labels affect recovery, thus answering the third question.

Controlled simulations test the predicted behavior.
On a restricted \texttt{WikiText-2} next-token task, we also compare \MeMo with classical baselines to assess its fidelity, predictive accuracy, memory use, and inference throughput.

\paragraph{Related work.}
As mentioned in the introduction, correlation matrix memories store key-value associations as sums of outer products \citep{anderson1972simple,kohonen1972correlation}, while \MeMo applies them to next-token prediction \citep{zanzotto2025memo}.
We study the classifier implemented by its single-layer retrieval rule.
Distributed symbolic codes \citep{plate1995holographic}, random projections \citep{johnson1984extensions}, and random features for kernel machines
\citep{rahimi2007random} provide related compression tools.
\MeMo sketches both context features and output labels.
\cite{cabannes2024scaling} study capacity and scaling laws for outer-product associative memories with independent random codes for whole inputs, whereas we analyze \MeMo's token-based context encoding and its recovery of the positional Hamming-kernel classifier.

Kernel classification provides the general framework \citep{scholkopf2002learning}.
Position-aware weighted-degree kernels have been used for sequence classification \citep{sonnenburg2005large}, and their degree-one case on fixed-length sequences is the positional Hamming kernel up to normalization.
Efficient approximations of mismatch string kernels have also been studied for sequence classification \citep{farhan2017efficient}, while recent work analyzes the expressive limits of Hamming kernels on discrete sequences \citep{amin2025biological}.
Our focus is different: we identify the count-weighted kernel vote inside \MeMo and bound when its randomized implementation preserves the ideal prediction.

\section{Preliminaries}

\label{sec:preliminaries}

\paragraph{Notation and data model.}

Let $\gV = \{1, \ldots, n\}$ be a finite vocabulary of $n$ tokens.
A context of length $h$ is an element $s = (s_1, \ldots, s_h) \in \gV^h$.
We consider a multiset $\gD = \{(s_j, y_j)\}_{j=1}^{M} \subseteq \gV^h \times \gV$, where $y_j$ is the token observed after context $s_j$.
Since $\gD$ is a multiset, repetitions contribute with their multiplicity.
Unless explicitly stated otherwise, $\gD$ is fixed and all probabilities are taken only over the random initialization of the representations.
The Euclidean inner product and norm are denoted by $\ip{\cdot}{\cdot}$ and $\norm{\cdot}$, respectively.

\paragraph{Gaussian codebooks and random projections.}
We use two token codebooks.
The input codebook $U = [u_1, \ldots, u_n] \in \R^{d \times n}$ represents tokens appearing in a context, and the output codebook $Z = [z_1, \ldots, z_n] \in \R^{d \times n}$ represents next-token labels.
In the untied model, $u_x, z_x \overset{\mathrm{i.i.d.}}{\sim} \mathcal{N}(0,I_d/d) \;\; \forall x \in \gV$, and $U$ and $Z$ are independent.
The tied construction is obtained by setting $Z = U$.
The algebraic description of \MeMo is the same in both cases, but the independence of $U$ and $Z$ will be relevant when computing expectations.

Standard Gaussian concentration implies that, for every fixed $\varepsilon \in (0,1)$, 
the events $\{ \vert \norm{u_x}^2 - 1 \vert > \varepsilon \}$ and $\{ \vert \ip{u_x}{u_{x'}} \vert >\varepsilon \}$ (with $x' \neq x$) hold with probability at most $2e^{-c d\varepsilon^2}$,
for an absolute constant $c>0$ (analogous bounds hold for $Z$).
A union bound therefore gives uniform approximate orthogonality of a codebook of size $n$
when $d \gtrsim \varepsilon^{-2} \bigl(\log n+\log(1/\delta)\bigr)$.

Let $d_h$ be the projection dimension and set $p = h d_h$.
We draw a projection $W\in\R^{d_h\times d}$ independently of the codebooks, with entries independently distributed as $\mathcal{N}(0,1/p)$.
This normalization ensures that $\E_W \norm{Wu}^2 = \frac{1}{h}\norm{u}^2$ for every fixed $u\in\R^d$. 
Equivalently, $\sqrt{h}W$ is a standard Gaussian Johnson-Lindenstrauss map into $\R^{d_h}$ \citep{johnson1984extensions}.
Thus, for a fixed codebook and distortion $\zeta\in(0,1)$, all pairwise distances among its $n$ columns are simultaneously preserved with probability at least $1-\delta$, provided that $d_h \gtrsim \zeta^{-2} \bigl(\log n+\log(1/\delta)\bigr)$.

\paragraph{The positional Hamming kernel.}
For every pair of sequences $s,q \in \gV^h$ we define the \emph{Positional Hamming Kernel} (or \emph{Positional Hamming Similarity}) as
\begin{equation}\label{eq:hamming-kernel}
    K_H(s,q) = \frac{1}{h}\sum_{r=1}^{h}\ind\{s_r=q_r\} = 1-\frac{d_H(s,q)}{h},
\end{equation}
where $d_H$ is the Hamming distance.
This is a positive-semidefinite kernel. 
To see this, let $e_x\in\R^n$ be the
one-hot vector associated with token $x$, and define $\Psi(s) = \frac{1}{\sqrt{h}} \bigl(e_{s_1}\oplus\cdots\oplus e_{s_h}\bigr) \in\R^{hn}$, where $\oplus$ indicates the concatenation operator.
Then
$\ip{\Psi(s)}{\Psi(q)} = K_H(s,q)$.
The feature map records the token appearing at each position separately.  It
does not contain joint features involving two or more positions.

\section{Problem Formulation}
\label{sec:problem}
For every candidate next token $c \in \gV$ and query context $q\in \gV^h$,  we define
the \emph{class score}
\[
S_c(q) = \sum_{j:y_j=c} K_H(q,s_j).
\]
The multiplicity of a pair in $\gD$ is retained in the sum.  We study the
following prediction problem.

\begin{problem}[Hamming-kernel next-token classification]
Given a multiset $\gD\subseteq\gV^h\times\gV$ and a query $q\in\gV^h$, compute
\begin{equation}\label{eq:ideal-prediction}
c^*(q) \in \underset{c\in\gV}{\arg\max}\, S_c(q).
\end{equation}
If the maximum is not unique, we can break the tie arbitrarily.
\end{problem}

The problem admits two complementary interpretations.  First, it is a \emph{kernel voting rule}: every stored example votes for its label with weight equal to its similarity to the query.
Second, it is a \emph{linear classification problem}.  
We have a class $c$ for every token in $\gV$, and for each class consider the unnormalized class prototype $\mu_c = \sum_{j:y_j=c}\Psi(s_j)$.
Since $\ip{\Psi(s)}{\Psi(q)} = K_H(s,q)$,
\[
    S_c(q) = \sum_{j:y_j=c} \ip{\Psi(q)}{\Psi(s_j)} =  \Psi(q)^\top \sum_{j:y_j=c} \Psi(s_j) = \ip{\Psi(q)}{\mu_c}.
\]
The prototypes are unnormalized because \MeMo is designed to accumulate occurrence counts.
Dividing $\mu_c$ by the number of examples in class $c$ would instead produce an average-similarity classifier and would remove the empirical class-frequency factor, that is a different prediction rule.

The additive structure of $K_H$ gives an alternative expression. Let $N_{c,r}(x) = \left\vert \{j:y_j=c,\; (s_j)_r=x\} \right\vert$
be the number of training examples with label $c$ and token $x$ at position
$r$.
Then we can write 
\begin{equation}\label{eq:count-table-score}
S_c(q) = \frac{1}{h}\sum_{r=1}^{h}N_{c,r}(q_r).
\end{equation}
Consequently, the ideal classifier is completely determined by the token--position--label count tables.
This identity will later provide both a geometric interpretation and a useful baseline against which the randomized \MeMo implementation can be compared.

\section{Single-Layer MeMo}
\label{sec:single-layer}

\paragraph{Sequence encoding.}
The randomized encoding of a context $s = (s_1, \ldots ,s_h)$ is the vertical concatenation 
\[
\phi(s) = \begin{bmatrix}
    Wu_{s_1}\\
    \vdots\\
    Wu_{s_h}
\end{bmatrix}
\in\R^{p},
\]
with $p = hd_h$.
The block structure preserves position: the encoding of the first token and that of the second token occupy disjoint coordinates, even though the same projection $W$ is applied to both.

\begin{lemma}
\label{prop:expected-key-kernel}
For every $s,q\in\gV^h$, 
we have $\E\ip{\phi(s)}{\phi(q)}=K_H(s,q)$.
\end{lemma}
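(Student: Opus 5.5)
The plan is to exploit the block structure of $\phi$ to reduce the statement to a single-position computation, and then compute that expectation by conditioning on the codebook. First, since $\phi(s)$ and $\phi(q)$ are vertical concatenations of $h$ blocks in $\R^{d_h}$, the inner product splits as
\[
\ip{\phi(s)}{\phi(q)} = \sum_{r=1}^{h} \ip{Wu_{s_r}}{Wu_{q_r}} = \sum_{r=1}^{h} u_{s_r}^\top W^\top W u_{q_r}.
\]
By linearity of expectation, it suffices to show that $\E\, u_x^\top W^\top W u_{x'} = \frac{1}{h}\ind\{x=x'\}$ for every pair of tokens $x,x'\in\gV$. Summing over $r$ then gives $\frac{1}{h}\sum_r \ind\{s_r=q_r\} = K_H(s,q)$, as in \eqref{eq:hamming-kernel}.

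Second, I will condition on the input codebook $U$, which is independent of $W$. The entries of $W$ are i.i.d.\ $\mathcal{N}(0,1/p)$, so $\E_W[W^\top W] = d_h \cdot \frac{1}{p} I_d = \frac{1}{h} I_d$. Hence
\[
\E_W\bigl[u_x^\top W^\top W u_{x'} \,\big|\, U\bigr] = \frac{1}{h}\ip{u_x}{u_{x'}}.
\]
This is the same normalization identity already noted in the preliminaries, $\E_W\norm{Wu}^2 = \frac{1}{h}\norm{u}^2$, now in its bilinear form.

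Third, I take the expectation over $U$. If $x=x'$, then $\E\norm{u_x}^2 = d\cdot\frac{1}{d} = 1$. If $x\neq x'$, then $u_x$ and $u_{x'}$ are independent and centered, so $\E\ip{u_x}{u_{x'}} = 0$. Together this yields $\E\,\ip{Wu_x}{Wu_{x'}} = \frac{1}{h}\ind\{x=x'\}$, and the tower property finishes the argument.

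No step is a real obstacle. The only point needing care is the bookkeeping: the expectation is joint over $W$ and $U$, and $W$ is independent of $U$. This independence is what justifies first conditioning on $U$ and then averaging. The output codebook $Z$ plays no role in $\phi$, so the statement holds in both the tied and the untied model.
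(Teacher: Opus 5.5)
Your proposal is correct and follows the paper's proof essentially step for step: you split the inner product over the $h$ position blocks, condition on $U$ to obtain $\E_W[W^\top W]=\frac{1}{h}I_d$, and then use $\E\ip{u_x}{u_{x'}}=\ind\{x=x'\}$. The only difference is that you write out the computations the paper leaves implicit, namely the bilinear form of the normalization identity and the two cases $x=x'$ and $x\neq x'$.
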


\begin{proof}
Using the block structure, and conditioning on the input codebook $U$, $\E\ip{\phi(s)}{\phi(q)}
= \sum_{r=1}^{h} \E\ip{Wu_{s_r}}{Wu_{q_r}}
= \frac{1}{h}\sum_{r=1}^{h} \E\ip{u_{s_r}}{u_{q_r}}$.
The Gaussian codebook satisfies $\E\ip{u_x}{u_{x'}}=\ind\{x=x'\}$.
The lemma follows by substituting the sum with the definition of Hamming Kernel from \cref{eq:hamming-kernel}.
\end{proof}

Thus, the random map $\phi$ is a finite-dimensional randomized feature map for
the positional Hamming kernel.  Its approximation error is controlled jointly
by the coherence of the input codebook $U$, and by the distortion introduced by
$W$.

\paragraph{Memorization.}
Single-layer \MeMo stores every association $(s_j,y_j)$ as an outer product between its randomized context key and its output code.
Therefore, the correlation matrix memory is 
\begin{equation*}
    C = \sum_{j=1}^{M}\phi(s_j)z_{y_j}^{\top}
    \in\R^{p\times d}.
\end{equation*}
We can add a new observation (or removing an existing one) $(s,y)$ simply by adding or subtracting one term $\phi(s)z_y^\top$.
The CMM contains $p d = h d_h d$ scalar parameters.
When $d_h=d/h$, one has $p=d$ and the memory is square (the definition above also covers the rectangular memories).

\paragraph{Retrieval and decoding.}
Given a query context $q \in \gV^h$, the retrieved output vector is $r(q)=C^{\top}\phi(q)\in\R^d$.
The score assigned to candidate token $c$ is obtained by decoding against its output code, i.e.,
\begin{equation}\label{eq:expanded-memo-score}
\widetilde S_c(q)
= \ip{z_c}{r(q)}
= \sum_{j=1}^{M} \ip{\phi(q)}{\phi(s_j)} \ip{z_c}{z_{y_j}}.
\end{equation}
Finally, single-layer \MeMo predicts $\widehat c(q) \in \underset{c\in\gV}{\arg\max}\,\widetilde S_c(q)$.

\Cref{eq:expanded-memo-score} displays two distinct sketches.
The factor $\ip{\phi(q)}{\phi(s_j)}$ approximates positional Hamming similarity in the sequence space, while $\ip{z_c}{z_{y_j}}$ approximates the indicator that the stored label is $c$.
If both families of vectors were exactly orthonormal, then $\widetilde S_c(q) = S_c(q)$ and \MeMo would solve \cref{eq:ideal-prediction} exactly.
The next lemma proves that $\widetilde S_c(q)$ is an unbiased estimator of $S_c(q)$.

\begin{lemma}
\label{prop:unbiased-score}
Suppose that the input codebook $U$, output codebook $Z$, and projection $W$ are mutually independent.
Then, for every fixed dataset $\gD$, query $q$, and candidate token $c$, $\E\widetilde S_c(q)=S_c(q)$.
\end{lemma}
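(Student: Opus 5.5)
We begin from the expanded form \cref{eq:expanded-memo-score},
\[
\widetilde S_c(q)=\sum_{j=1}^{M}\ip{\phi(q)}{\phi(s_j)}\,\ip{z_c}{z_{y_j}},
\]
and use linearity of expectation to reduce to a single summand. Since $\gD$, $q$ and $c$ are fixed, it suffices to show that
\[
\E\bigl[\ip{\phi(q)}{\phi(s_j)}\ip{z_c}{z_{y_j}}\bigr]=K_H(q,s_j)\,\ind\{y_j=c\}
\]
for each $j$. Summing over $j$ then gives $\sum_{j:y_j=c}K_H(q,s_j)=S_c(q)$.

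For a single summand, observe that $\ip{\phi(q)}{\phi(s_j)}$ is a measurable function of $(U,W)$ alone, while $\ip{z_c}{z_{y_j}}$ is a function of $Z$ alone. The hypothesis that $U$, $Z$ and $W$ are mutually independent implies that $(U,W)$ is independent of $Z$, so the expectation of the product factorizes. Both factors are integrable, being polynomials in Gaussian entries, so Fubini's theorem applies and the factorization is justified.

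The first factor has expectation $K_H(q,s_j)$ by \cref{prop:expected-key-kernel}, whose proof conditions on $U$ and uses the independence of $W$ from $U$. For the second factor, $\E\ip{z_c}{z_{y}}=\ind\{c=y\}$ because the columns of $Z$ are i.i.d.\ $\mathcal N(0,I_d/d)$. Indeed, $\E\norm{z_c}^2=d\cdot\frac1d=1$, and for $c\ne y$ the two columns are independent with mean zero, so $\E\ip{z_c}{z_y}=\ip{\E z_c}{\E z_y}=0$.

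There is no real obstacle. The only point needing care is the factorization step, which is where independence of $Z$ from $(U,W)$ is used. In the tied case $Z=U$ the two factors are correlated and the product's expectation would pick up extra fourth-moment terms, so the hypothesis cannot be dropped.
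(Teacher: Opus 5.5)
Your proposal is correct and follows the paper's proof. Both apply linearity of expectation to \cref{eq:expanded-memo-score}, factor each summand using the independence of $Z$ from $(U,W)$, and then use \Cref{prop:expected-key-kernel} together with $\E\ip{z_c}{z_y}=\ind\{c=y\}$. Your extra remarks on integrability and on bias in the tied case are accurate, but they do not change the argument.
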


\begin{proof}
The independence of $Z$ from $(U,W)$ and linearity of expectation give 
\[
\E\widetilde S_c(q)
= \sum_{j=1}^{M} \E\ip{\phi(q)}{\phi(s_j)} \E\ip{z_c}{z_{y_j}}
= \sum_{j=1}^{M} K_H(q,s_j)\ind\{c=y_j\}
= S_c(q),
\]
where we used \Cref{prop:expected-key-kernel} and the identity
$\E\ip{z_c}{z_y}=\ind\{c=y\}$.
\end{proof}

\begin{remark}[Tied input and output representations]
The tied construction $Z=U$ uses fewer random objects and corresponds to using the same token representation on both sides of the memory.
In this case, \cref{eq:expanded-memo-score} remains exact, but the two inner products in each summand are generally dependent.
Hence \Cref{prop:unbiased-score} cannot be invoked directly: repeated occurrences of output tokens inside contexts can produce bias terms.
Separating this bias from random cross-talk is an essential step in the analysis of the tied architecture.
\end{remark}

The difference $\widetilde{S}_c(q)-S_c(q)$ is the retrieval error induced by the random sketch.  
Correct next-token prediction depends not only on its absolute size, but on the \emph{margin} of the ideal classifier, i.e.,
\[
    \Delta(q) = S_{c^*(q)}(q) - \max_{c \neq c^*(q)}S_c(q).
\]
A subsequent capacity analysis can therefore be organized in two steps: first obtain simultaneous bounds on the score errors, and then compare those bounds with $\Delta(q)$.
This separates the intrinsic ambiguity of the ideal classification problem from the approximation error and cross-talk introduced by the randomized memory representation.

\section{Analysis}
\label{sec:error-margin-streamlined}

The model approximates the ideal score in two successive stages: the retrieval and the decoding. 
For every label $c$, we define the score after the context sketch but before output decoding,
\begin{equation}
    G_c(q)
    =\sum_{j:y_j=c}\ip{\phi(q)}{\phi(s_j)}.
    \label{eq:predecoding-score-streamlined}
\end{equation}
We can think of $G_c(q)$ as the overall contribution of class $c$ with respect to query $q$.
The passage from $S_c(q)$ to $G_c(q)$ contains only the approximation of the Hamming kernel.
Grouping \cref{eq:expanded-memo-score} by the stored label then gives
\begin{equation}
    \label{eq:decoded-grouped-score-streamlined}
    \widetilde S_c(q)
    = \sum_{y \in \gV} G_y(q) \ip{z_c}{z_y}.
\end{equation}
Hence, the \emph{total score error} can be decomposed in the following two elementary
\begin{equation}
    \widetilde S_c(q)-S_c(q)
    = \bigl(G_c(q)-S_c(q)\bigr) + \bigl(\widetilde S_c(q)-G_c(q)\bigr).
    \label{eq:two-stage-error-streamlined}
\end{equation}
We will refer to the first difference as \emph{context-sketch error}, and to the second one as \emph{output-code interference}.
Intuitively $|G_c(q)-S_c(q)|$ measures how accurately are contexts represented by the random map $\phi$, while $|S_c(q)-G_c(q)|$ indicates how much interference is introduced during decoding phase.

Thus the analysis will be organized in four steps:
\begin{enumerate}[leftmargin=*,itemsep=2pt,topsep=3pt]
    \item In the remainder of this section we identify how much score error can be tolerated without changing the predicted token.
    
    \item In \Cref{sec:context-sketch-streamlined} we give a probabilistic bound to $|G_c(q) - S_c(q)|$. This is needed because the input codebook $U$ and the projection $W$ distort the ideal Hamming similarities.
    
    \item In \Cref{sec:output-interference-streamlined} we give a probabilistic bound to $| \widetilde{S}_c(q) - G_c(q)|$. This accounts for the fact that finite-dimensional output codes are not exactly orthogonal.

    \item Finally, in \Cref{sec:full-guarantee-streamlined} we combine the previous bounds in a main theorem, which gives a sufficient condition for \MeMo and the ideal classifier to select the same token.
\end{enumerate}


We now proceeds with the first point of the analysis. 
Without loss of generality, assume that the ideal classifier has a unique winner $c^* = c^*(q)$. 
For every competitor $c \neq c^*$, we define
\begin{equation*}
    \Delta_c(q)=S_{c^*}(q)-S_c(q),
    \qquad
    \Delta(q)=\min_{c\neq c^*}\Delta_c(q) = S_{c^*}(q) - \max_{c \neq c^*} S_c(q).
\end{equation*}
Thus $\Delta(q)$ is the gap between the largest and second-largest ideal scores.

\begin{theorem}
\label{thm:margin-preservation-streamlined}
Let $\varepsilon_{\mathrm{score}}(q) = \max_{c\in\gV}|\widetilde S_c(q)-S_c(q)|$.
If $\varepsilon_{\mathrm{score}}(q)< \Delta(q)/2$,
then single-layer \MeMo and the ideal Hamming-kernel classifier predict the same token, i.e., $\widehat c(q)=c^*(q)$.
\end{theorem}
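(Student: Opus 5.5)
The plan is a deterministic perturbation argument: no probability is needed, since the hypothesis already controls every score error uniformly over $c\in\gV$. Write $c^*=c^*(q)$ and $\varepsilon=\varepsilon_{\mathrm{score}}(q)$. The standing assumption that the ideal winner is unique gives $\Delta(q)>0$, so the hypothesis $\varepsilon<\Delta(q)/2$ is meaningful.

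Fix an arbitrary competitor $c\neq c^*$. The definition of $\varepsilon$ gives the two one-sided bounds
$\widetilde S_{c^*}(q)\ge S_{c^*}(q)-\varepsilon$ and $\widetilde S_c(q)\le S_c(q)+\varepsilon$. Subtracting them yields
\[
\widetilde S_{c^*}(q)-\widetilde S_c(q)\;\ge\; \Delta_c(q)-2\varepsilon\;\ge\;\Delta(q)-2\varepsilon\;>\;0,
\]
where the middle inequality holds because $\Delta(q)=\min_{c\neq c^*}\Delta_c(q)$, and the last one is the hypothesis.

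Since $c$ was arbitrary, $\widetilde S_{c^*}(q)>\widetilde S_c(q)$ for every $c\neq c^*$. Hence $c^*$ is the \emph{unique} maximizer of $\widetilde S_\cdot(q)$. This uniqueness matters: it means the argmax defining $\widehat c(q)$ contains only $c^*$, so $\widehat c(q)=c^*(q)$ however ties are broken.

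The argument has no real obstacle. The only points needing care are these two:
\begin{enumerate}[leftmargin=*,itemsep=2pt,topsep=3pt]
\item The inequality must be strict, so that tie-breaking in the definition of $\widehat c$ cannot matter.
\item The error has to be counted twice, once for the winner's score and once for the competitor's. This is exactly why the threshold is $\Delta(q)/2$ rather than $\Delta(q)$.
\end{enumerate}
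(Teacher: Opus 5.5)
Your proof is correct and matches the paper's argument: you use the same two one-sided bounds, subtract them to get $\widetilde S_{c^*}(q)-\widetilde S_c(q)\ge\Delta_c(q)-2\varepsilon_{\mathrm{score}}(q)\ge\Delta(q)-2\varepsilon_{\mathrm{score}}(q)>0$ for every competitor, and conclude that $c^*(q)$ is the strict maximizer of the randomized scores. Your remark that strictness makes the tie-breaking rule irrelevant is a useful clarification of the paper's phrase ``the two classifiers have the same argmax.''
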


\begin{proof}
Fix a competitor $c \neq c^*$.
By definition, we have $\widetilde{S}_{c^*}(q) \geq S_{c^*}(q)-\varepsilon_{\mathrm{score}}(q)$, and $\widetilde{S}_c(q) \leq S_c(q)+\varepsilon_{\mathrm{score}}(q)$.
Subtracting the second inequality from the first yields
\begin{align*}
    \widetilde{S}_{c^*}(q)-\widetilde S_c(q)
    &\geq S_{c^*}(q) - S_c(q) - 2\varepsilon_{\mathrm{score}}(q)\\
    &= \Delta_c(q) - 2\varepsilon_{\mathrm{score}}(q) 
    \geq \Delta(q) - 2\varepsilon_{\mathrm{score}}(q) > 0.
\end{align*}
Thus the randomized score of $c^*$ is strictly larger than the score of every competitor, and the two classifiers have the same argmax.
\end{proof}

The theorem separates two causes of prediction failure.
A small $\Delta(q)$ is an ambiguity of the ideal Hamming-kernel problem, while a large score error is a limitation of the randomized representation. 
By \cref{eq:two-stage-error-streamlined}, it remains to control the context sketch and the output decoding separately.

\subsection{Context-Sketch Approximation}
\label{sec:context-sketch-streamlined}

We now control the first difference in \cref{eq:two-stage-error-streamlined}. 
We first state the result for an arbitrary per-position dimension $d_h$, and then we specialize it to the standard
square-memory choice $d_h=d/h$. 
The argument uses only one idea: the input codebook $U$ and the projection $W$ must jointly preserve every inner product between two token codes.

For each label $c$, let $M_c=|\{j:y_j=c\}|$ be the number of observation in $\gD$ that has $c$ as next token, and $M_{\max}=\max_c M_c$.
Observe that $\sum_{c}M_c = M$, and $M_{\max} \leq M$.

\begin{theorem}[Context-sketch approximation]
\label{thm:context-sketch}
For $\delta \in (0,1)$, we define $L_\delta=\ln\left(\frac{2n}{\sqrt{\delta}}\right)$.
There exist universal constants $c_0, C_0 >0$ such that, if $d\geq c_0L_\delta$, then with probability at least $1-\delta$ over $(U,W)$, simultaneously for every query $q\in\gV^h$,
\begin{equation*}
    \max_{c\in\gV}|G_c(q)-S_c(q)|
    \leq
    \mathcal B_{\mathrm{ctx}}(\delta)
    := C_0 M_{\max}
    \left(
        \sqrt{\frac{L_\delta}{d}}+\frac{L_\delta}{d}
        +\sqrt{\frac{L_\delta}{d_h}}+\frac{L_\delta}{d_h}
    \right).
\end{equation*}
\end{theorem}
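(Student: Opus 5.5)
The plan is to reduce the uniform-in-$q$ statement to a uniform bound on the $n^2$ token-pair inner products $\ip{Wu_x}{Wu_{x'}}$. By the block structure of $\phi$,
\[
\ip{\phi(q)}{\phi(s_j)}=\sum_{r=1}^{h}\ip{Wu_{q_r}}{Wu_{(s_j)_r}},
\qquad
K_H(q,s_j)=\sum_{r=1}^{h}\tfrac1h\ind\{q_r=(s_j)_r\}.
\]
Define the per-pair error $E_{x,x'}=\ip{Wu_x}{Wu_{x'}}-\frac1h\ind\{x=x'\}$. Then
\[
G_c(q)-S_c(q)=\sum_{j:y_j=c}\sum_{r=1}^{h}E_{q_r,(s_j)_r}.
\]
If we prove $\max_{x,x'}|E_{x,x'}|\le \eta/h$, it follows that $|G_c(q)-S_c(q)|\le M_c\,\eta\le M_{\max}\eta$ for every $c$ and every $q$. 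Uniformity over the exponentially many queries is automatic, because the bound depends only on the finitely many token pairs. The factor $1/h$ cancels against the $h$ positions, which is why the final bound has no explicit dependence on $h$.

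To control $E_{x,x'}$ we split it into a codebook term and a projection term:
\[
hE_{x,x'}=\Bigl(\ip{u_x}{u_{x'}}-\ind\{x=x'\}\Bigr)+\Bigl(h\ip{Wu_x}{Wu_{x'}}-\ip{u_x}{u_{x'}}\Bigr).
\]

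\textbf{Codebook term.} By the Gaussian concentration recalled in the preliminaries, in its Bernstein form, for $\varepsilon=C\bigl(\sqrt{L_\delta/d}+L_\delta/d\bigr)$ each norm deviation and each cross inner product deviates by more than $\varepsilon$ with probability at most about $e^{-2L_\delta}=\delta/(4n^2)$. A union bound over at most $n^2$ pairs then gives failure probability at most $\delta/2$. This choice explains the form $L_\delta=\ln(2n/\sqrt\delta)$, since $e^{-2L_\delta}=\delta/(4n^2)$. On this event all norms satisfy $\norm{u_x}^2\le 2$, which uses $d\ge c_0L_\delta$.

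\textbf{Projection term.} Condition on $U$. Since $\sqrt h W$ has i.i.d.\ $\mathcal N(0,1/d_h)$ entries, $h\ip{Wu_x}{Wu_{x'}}$ is an average of $d_h$ products of jointly Gaussian variables with mean $\ip{u_x}{u_{x'}}$. These products are subexponential with scale $\norm{u_x}\norm{u_{x'}}\le 2$. Alternatively, we can use polarization, writing $\ip{a}{b}=\tfrac14(\norm{a+b}^2-\norm{a-b}^2)$ and applying $\chi^2$ concentration to $\norm{\sqrt h W(u_x\pm u_{x'})}^2$. Bernstein's inequality gives deviation at most $C\bigl(\sqrt{L_\delta/d_h}+L_\delta/d_h\bigr)$ with probability $1-\delta/(4n^2)$ per pair, and a union bound gives $1-\delta/2$.

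Combining the two terms, with probability at least $1-\delta$ we get $h|E_{x,x'}|\le \eta$, where $\eta$ is the bracket in $\mathcal B_{\mathrm{ctx}}$ times a constant. This proves the theorem with a suitable $C_0$.

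The main obstacle is the projection step. The deviation of $h\ip{Wu_x}{Wu_{x'}}$ scales with $\norm{u_x}\norm{u_{x'}}$, so we must work conditionally on the good codebook event and keep the norms bounded by a constant. We must also handle the diagonal case $x=x'$, where the variable is a $\chi^2$, and the off-diagonal case uniformly, which requires some care with the constants. The polarization route handles both cases with a single $\chi^2$ tail bound, so we would try it first. A secondary point is keeping the linear term $L_\delta/d_h$: it is needed because $d_h$ is not assumed large relative to $L_\delta$.
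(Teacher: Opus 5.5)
Your proposal is correct and follows the paper's proof essentially step for step. You reduce to the worst token-pair error $\eta$, so that $|G_c(q)-S_c(q)|\le M_c\eta$ uniformly in $q$, and split it into the codebook coherence $\eta_U$ and the distortion $\eta_A$ of $A=\sqrt{h}W$; each is bounded by chi-square/Bernstein tails at level $e^{-2L_\delta}=\delta/(4n^2)$ with a union bound costing $\delta/2$, using $d\ge c_0L_\delta$ to keep $\norm{u_x}\norm{u_{x'}}\le 2$ in the Bernstein step conditional on $U$, and your polarization variant is a valid alternative to the paper's direct Bernstein bound on the products $(g_k^\top u_x)(g_k^\top u_y)$, which is also your main route.
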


\begin{proof}
Set the matrix $A=\sqrt{h}W$, so that $A$ is a standard Gaussian projection into $\R^{d_h}$.
We define the largest error on a token pair by $\eta = \max_{x,y\in\gV} \vert \ip{Au_x}{Au_y}-\ind\{x=y\} \vert$.
Let $\eta_U = \max_{x,y \in \gV} \vert \ip{u_x}{u_y} - \ind\{x=y\} \vert$, and $\eta_A = \max_{x,y \in \gV} \vert \ip{Au_x}{Au_y} - \ip{u_x}{u_y} \vert$.
By triangular inequality $\eta \leq \eta_U + \eta_A$.
For $x = y$, the inner product $\ip{u_x}{u_x} = \norm{u_x}^2$ follows the distribution $d^{-1} \chi^2$.
We apply the standard chi-square concentration bound with parameter $2L_\delta$ , obtaining
\[
\Prob\left( \bigl\vert \norm{u_x}^2 - 1 \bigr\vert > 2\sqrt{\frac{2L_\delta}{d}} + \frac{4L_\delta}{d} \right) = \Prob\left( \bigl\vert d\norm{u_x}^2 - d \bigr\vert > 2\sqrt{2dL_\delta} + 4L_\delta \right) \leq 2e^{-2L_\delta}.
\]
For $x \neq y$, the inner product $\ip{u_x}{u_y}$ is a sum of independent centered sub-exponential random variables.
By Bernstein's inequality \cite[Corollary 2.9.2]{vershynin2018high} we have
\[
\Prob\left( \bigl\vert \ip{u_x}{u_y}\bigr\vert > C_1 \left[\sqrt{\frac{2L_\delta}{d}} + \frac{2L_\delta}{d} \right] \right) \leq 2e^{-2L_\delta},
\]
for some absolute constant $C_1 > 0$.
By union bound over all pairs of tokens $x,y \in \gV$ we have that $\eta_U \leq C_2\left[\sqrt{\frac{L_\delta}{d}} + \frac{L_\delta}{d} \right]$ with probability at least $1 - 2n^2e^{-2L_\delta} = 1-\delta/2$, for some absolute constant $C_2>0$.

We now study the additional error introduced by $A$.
By thinking of the rows of $A$ as random vectors $g_k^\top / \sqrt{d_h} \sim \mathcal{N}(0,I_d/d_h)$, we can rewrite
\[
\ip{Au_x}{Au_y} - \ip{u_x}{u_y} = \frac{1}{d_h}\sum_{k=1}^{d_h}     \left((g_k^\top u_x)(g_k^\top u_y) - \ip{u_x}{u_y}\right),\; \forall x,y \in \gV.
\]
We now fix the codebook $U$.
In this way, the vectors $u_x$ and $u_y$ are deterministic, while the randomness comes only from the independent Gaussian vectors $g_k$.
The terms in the preceding sum are
independent, centered, and sub-exponential.
Applying the Bernstein's inequality again gives
\[
\underset{A}{\Prob}\left( \bigl\vert \ip{Au_x}{Au_y} - \ip{u_x}{u_y} \bigr\vert > C_3 \norm{u_x}\norm{u_y}\left[\sqrt{\frac{2L_\delta}{d_h}} + \frac{2L_\delta}{d_h} \right] \; \middle| \; U\right) \leq 2e^{-2L_\delta}.
\]
For some absolute constant $C_3 > 0$.
Now observe that $\norm{u_x}\norm{u_y} \leq \max_{z} \norm{u_z}^2 \leq 1+ \eta_U$, where the last inequality follows by taking $x = y = z$ in the definition of $\eta_U$.
Therefore, a union
bound over all $n^2$ pairs gives $\eta_A \leq C_4 (1+\eta_U)\left[\sqrt{\frac{L_\delta}{d_h}} + \frac{L_\delta}{d_h}\right]$, with conditional probability at least $1 - \delta/2$, for some absolute constant $C_4 > 0$.

On the event where the preceding bound for $\eta_U$ holds, the assumption $d \geq c_0L_\delta$ gives $\eta_U \leq C_2\left(\frac{1}{\sqrt{c_0}}+\frac{1}{c_0}\right)$.
By choosing the universal constant $c_0$ sufficiently large, we can ensure that $\eta_U\leq1$.
Consequently, $1+\eta_U \leq 2$, and hence $\eta_A \leq 2C_4 \left[ \sqrt{\frac{L_\delta}{d_h}} +\frac{L_\delta}{d_h} \right]$.

The bound for $\eta_U$ fails with probability at most $\delta/2$, while for every codebook $U$ satisfying that bound, the conditional estimate for $\eta_A$ fails with probability at most $\delta/2$.
Therefore, both estimates hold simultaneously with probability at least $1-\delta$.
On this event,
\begin{align*}
\eta
&\leq\eta_U+\eta_A
\leq C_2\left[ \sqrt{\frac{L_\delta}{d}} + \frac{L_\delta}{d} \right] + 2C_4\left[ \sqrt{\frac{L_\delta}{d_h}} + \frac{L_\delta}{d_h} \right]
\leq C_0\left[ \sqrt{\frac{L_\delta}{d}} + \frac{L_\delta}{d} + \sqrt{\frac{L_\delta}{d_h}} + \frac{L_\delta}{d_h} \right],
\end{align*}
where $C_0=\max\{C_2, 2C_4\}$.
Finally, for every query $q \in \gV^h$ and every class $c \in \gV$,
\begin{align*}
|G_c(q)-S_c(q)|
&\leq \frac{1}{h} \sum_{j:y_j=c}\sum_{r=1}^h \left| \langle Au_{q_r},Au_{(s_j)_r}\rangle -\ind\{q_r=(s_j)_r\} \right|\\
&\leq \frac{1}{h}\sum_{j:y_j=c}\sum_{r=1}^h\eta
= M_c\eta \leq M_{\max}\eta. \qedhere
\end{align*}
\end{proof}

\paragraph{Square-memory specialization.}
For the standard \MeMo choice $d_h=d/h$ (i.e., with a squared memory matrix), we have $d_h\leq d$, so the terms that depends on $d_h$ dominate, therefore
$\mathcal{B}_{\mathrm{ctx}}(\delta) = M_{\max} O\Big( \sqrt{\frac{hL_\delta}{d}}+\frac{hL_\delta}{d} \Big)$.
When $d\gg h\log{(n/\delta)}$, the square-root term dominates, and $d\gtrsim h\log{(n/\delta)}/\varepsilon^2$ suffices to make this bound at most $\varepsilon M_{\max}$.
When $d \lesssim h\log{(n/\delta)}$, the bound does not guarantee a small relative error, although the actual error may still be small.
In the worst case $M_{\max}=M$.
The linear dependence on $M_{\max}$ is unavoidable for a uniform result, because duplicated contexts repeat the same kernel error coherently.
However, the proof of \Cref{thm:context-sketch} is pessimistic because it bounds every token-position comparison by the same worst-case error $\eta$.
In practice, different token-position pairs may produce errors of different magnitudes and signs.
A more refined, data-dependent analysis could exploit their repetition frequencies and possible cancellations, potentially leading to tighter bounds.

\subsection{Output-Code Interference}
\label{sec:output-interference-streamlined}

We now analyze the second difference in
\cref{eq:two-stage-error-streamlined}.
Fix the input codebook $U$, the projection $W$, the dataset, and the query.
Under this conditioning, the values $G_y(q)$ are deterministic and the only remaining randomness is in the independent Gaussian output codes $Z$.
From \cref{eq:decoded-grouped-score-streamlined}
\begin{equation}
    \widetilde S_c(q)-G_c(q)
    =G_c(q)\bigl(\norm{z_c}^2-1\bigr)
     +\sum_{y\neq c}G_y(q)\ip{z_c}{z_y}.
    \label{eq:self-and-cross-output-error-streamlined}
\end{equation}
The first term is the \emph{norm error} of the candidate code $z_c$, while the second is the \emph{cross-talk} from the competing output codes.

We now introduce the quantity $\widehat{\mathcal C}_H(q)=\sum_{y\in\mathcal V}G_y(q)^2$,
that is the squared norm of the score vector before the decoding phase in \cref{eq:decoded-grouped-score-streamlined}.
It measures the total energy of the evidence assigned by the query to the output labels and therefore controls the amount of interference caused by their non-orthogonal codes.
In particular, the typical cross-talk scale is of order $\sqrt{\widehat{\mathcal C}_H(q)/d}$.
Observe that this is a quantity determined by the data and the query.


\begin{theorem}[Output-code interference]
\label{thm:output-code-interference}
Let $\delta\in(0,1)$ and set $t = \ln\left(\frac{4n}{\delta}\right)$, $\alpha = 2\sqrt{\frac{t}{d}}+\frac{2t}{d}$. 
Conditionally on $(U,W)$, with probability at least $1-\delta$ over the output codebook $Z$, every candidate $c \in \gV$ satisfies
\begin{equation*}\label{eq:output-interference-bound-streamlined}
\vert \widetilde{S}_c(q) - G_c(q) \vert \leq \mathcal{B}_{\mathrm{out}}(q,\delta) := \sqrt{\widehat{\mathcal C}_H(q)}
\left( \alpha + \sqrt{\frac{2t(1+\alpha)}{d}} \right).
\end{equation*}
\end{theorem}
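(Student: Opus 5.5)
We condition on $(U,W)$ throughout, so that the numbers $G_y(q)$ are fixed and only $Z$ is random. We start from the decomposition \cref{eq:self-and-cross-output-error-streamlined} and bound its two terms separately. The plan is to condition additionally on $z_c$ when handling the cross-talk. Given $z_c$, the vectors $z_y$ with $y\neq c$ are independent of $z_c$ and distributed as $\mathcal N(0,I_d/d)$. Hence
\[
X_c := \sum_{y\neq c} G_y(q)\ip{z_c}{z_y} = \Bigl\langle z_c, \sum_{y\neq c} G_y(q) z_y\Bigr\rangle
\]
is, conditionally on $z_c$, a centered Gaussian with variance $\norm{z_c}^2\sum_{y\neq c}G_y(q)^2/d \leq \norm{z_c}^2\,\widehat{\mathcal C}_H(q)/d$. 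This reduces the cross-talk to a one-dimensional Gaussian tail bound once $\norm{z_c}^2$ is controlled.

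\textbf{Step 1 (norm control).} Since $d\norm{z_c}^2\sim\chi^2_d$, we apply the Laurent--Massart bound with parameter $t$:
\[
\Prob\bigl(|d\norm{z_c}^2-d| > 2\sqrt{dt}+2t\bigr)\leq 2e^{-t}.
\]
The upper tail is $\sqrt{\cdot}$ plus a linear term, and the lower tail is only $2\sqrt{dt}$, so the symmetric form is valid. This gives $|\norm{z_c}^2-1|\leq \alpha$ with probability at least $1-2e^{-t}$. On this event the norm-error term satisfies
\[
|G_c(q)|\,|\norm{z_c}^2-1| \leq \alpha |G_c(q)| \leq \alpha\sqrt{\widehat{\mathcal C}_H(q)},
\]
and the norm itself satisfies $\norm{z_c}^2\leq 1+\alpha$.

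\textbf{Step 2 (cross-talk).} Conditionally on $z_c$, the Gaussian tail bound $\Prob(|X|>s)\leq 2e^{-s^2/(2\sigma^2)}$ with $s=\sigma\sqrt{2t}$ gives
\[
|X_c|\leq \norm{z_c}\sqrt{2t\,\widehat{\mathcal C}_H(q)/d}
\]
with probability at least $1-2e^{-t}$. On the event of Step 1, this is at most $\sqrt{\widehat{\mathcal C}_H(q)}\sqrt{2t(1+\alpha)/d}$. Because the conditional failure probability is at most $2e^{-t}$ for every realization of $z_c$, it remains at most $2e^{-t}$ after integrating over $z_c$. Summing the two bounds through the triangle inequality yields exactly $\mathcal B_{\mathrm{out}}(q,\delta)$ for a fixed $c$, with failure probability at most $4e^{-t}$.

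\textbf{Step 3 (union bound).} A union bound over the $n$ candidates gives total failure probability at most $4ne^{-t}=\delta$, by the choice $t=\ln(4n/\delta)$.

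The statement is for a fixed query $q$ and holds conditionally on $(U,W)$, so no union over queries is needed. The main point to handle carefully is the dependence between the two terms: both involve $z_c$. This is resolved by conditioning on $z_c$ in Step 2 rather than attempting a joint bound. The only remaining care is checking that the chi-square deviation constants match the stated $\alpha$, namely that $2\sqrt{t/d}+2t/d$ covers both tails.
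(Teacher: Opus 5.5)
Your proposal is correct and follows essentially the same route as the paper: condition on $(U,W)$ and then on $z_c$, so that the cross-talk is a one-dimensional Gaussian; control $\norm{z_c}^2$ by chi-square concentration to bound both the norm-error term and the conditional variance; then take a union bound over the $n$ candidates, using $4ne^{-t}=\delta$. Citing the Laurent--Massart tails explicitly also checks that $\alpha=2\sqrt{t/d}+2t/d$ is exactly the right deviation constant, which the paper only attributes to a general citation.
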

%
%
\begin{proof}
Fix an element $c \in \gV$ and define $L_c(q)^2=\sum_{y\neq c}G_y(q)^2 = \widehat{\mathcal{C}}_H(q) - G_c(q)^2$, and the vector $v_c(q)=\sum_{y\neq c}G_y(q)z_y$.
Since the vectors $z_y$ (with $y\neq c$) are independent and distributed as $\mathcal N(0,I_d/d)$, we have that $v_c(q)\sim \mathcal{N}\left(0,\frac{L_c(q)^2}{d}I_d\right)$, and $v_c(q)$ is independent of $z_c$.
Conditional on $z_c$, the cross-talk term in \cref{eq:self-and-cross-output-error-streamlined} is therefore one-dimensional Gaussian distributed as
\begin{equation}
\ip{z_c}{v_c(q)}\mid z_c
    \sim
    \mathcal{N}\left(
        0,\frac{L_c(q)^2}{d}\norm{z_c}^2
    \right).
    \label{eq:conditional-cross-talk-distribution-streamlined}
\end{equation}

Because $\norm{z_c}^2 \sim d^{-1}\chi_d^2$, standard chi-square concentration \citep{vershynin2018high} gives
\[
\Prob\bigl( \bigl\vert \norm{z_c}^2-1 \bigr\vert > \alpha \bigr)\leq 2e^{-t}.
\]
On the complementary event, $\norm{z_c}\leq\sqrt{1+\alpha}$.
The Gaussian tail bound applied to \cref{eq:conditional-cross-talk-distribution-streamlined} gives, conditionally on $z_c$, 
\[\Prob\biggl( \vert \ip{z_c}{v_c(q)} \vert > L_c(q)\norm{z_c}\sqrt{\frac{2t}{d}} \vert z_c \biggr)\leq 2e^{-t}.
\]
Combining the two events with \cref{eq:self-and-cross-output-error-streamlined}, with probability at least $1-4e^{-t}$ we have
\[
\vert \widetilde{S}_c(q)-G_c(q) \vert
\leq \vert G_c(q) \vert \alpha + L_c(q) \sqrt{1+\alpha}\sqrt{\frac{2t}{d}}
\leq \sqrt{\widehat{\mathcal C}_H(q)} \left( \alpha + \sqrt{\frac{2t(1+\alpha)}{d}} \right),
\]
where last inequality holds since both $\vert G_c(q) \vert$ and $L_c(q)$ are at most $\sqrt{\widehat{\mathcal C}_H(q)}$. Finally, since $4e^{-t}=\delta/n$, a union bound over the $n$ candidates proves simultaneous claim.
\end{proof}

\subsection{Full Prediction Guarantee}
\label{sec:full-guarantee-streamlined}

The two concentration results from \Cref{sec:context-sketch-streamlined,sec:output-interference-streamlined} can now be combined without assuming independence between individual stored examples.

\begin{theorem}[Exact recovery of the ideal prediction]
\label{thm:full-recovery-streamlined}
Assume the untied model, with arbitrary positive integer $d_h$.
Fix a dataset and query, suppose that the ideal winner $c^*(q)$ is unique, and let $\delta\in(0,1)$.
If $d\geq c_0L_{\delta/2}$ (as defined in \Cref{thm:context-sketch}), then, with probability at least $1-\delta$ over $(U,W,Z)$,
\begin{equation}\label{eq:complete-score-bound-streamlined}
\varepsilon_{\mathrm{score}}(q) \leq
\mathcal{B}_{\mathrm{ctx}}\left(\frac{\delta}{2}\right)
+\mathcal{B}_{\mathrm{out}}\left(q,\frac{\delta}{2}\right).
\end{equation}
On the same event, if $\mathcal{B}_{\mathrm{ctx}}\left(\frac{\delta}{2}\right)
+ \mathcal{B}_{\mathrm{out}}\left(q,\frac{\delta}{2}\right) < \frac{\Delta(q)}{2}$, then $\widehat c(q)=c^*(q)$.
\end{theorem}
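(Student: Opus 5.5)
The plan is to combine \Cref{thm:context-sketch} and \Cref{thm:output-code-interference} through the two-stage decomposition \cref{eq:two-stage-error-streamlined}, each at confidence level $\delta/2$, and then invoke \Cref{thm:margin-preservation-streamlined}. Let $E_1$ be the event, over $(U,W)$, on which
\[
\max_c|G_c(q)-S_c(q)|\leq\mathcal B_{\mathrm{ctx}}(\delta/2).
\]
Since $d\geq c_0L_{\delta/2}$, \Cref{thm:context-sketch} applied with $\delta/2$ gives $\Prob(E_1)\geq 1-\delta/2$. This bound holds uniformly over all queries, so in particular for the fixed $q$.

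Next, let $E_2$ be the event on which $|\widetilde S_c(q)-G_c(q)|\leq\mathcal B_{\mathrm{out}}(q,\delta/2)$ for all $c$. \Cref{thm:output-code-interference} gives $\Prob(E_2\mid U,W)\geq 1-\delta/2$ for every realization of $(U,W)$, using the independence of $Z$ from $(U,W)$ in the untied model.

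The one delicate point is that $\mathcal B_{\mathrm{out}}(q,\delta/2)$ depends on $(U,W)$ through $\widehat{\mathcal C}_H(q)$. The event $E_2$ is therefore defined relative to a data-dependent but $(U,W)$-measurable threshold. This is harmless: the conditional guarantee is uniform in $(U,W)$, so integrating over $(U,W)$ (tower property) yields $\Prob(E_2)\geq 1-\delta/2$ unconditionally. A union bound then gives $\Prob(E_1\cap E_2)\geq 1-\delta$. No independence between stored examples is needed, because both theorems are worst-case over the dataset.

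On $E_1\cap E_2$, the triangle inequality applied to \cref{eq:two-stage-error-streamlined} bounds $|\widetilde S_c(q)-S_c(q)|$ by $\mathcal B_{\mathrm{ctx}}(\delta/2)+\mathcal B_{\mathrm{out}}(q,\delta/2)$ for every $c$. Taking the maximum over $c$ gives \cref{eq:complete-score-bound-streamlined}.

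If moreover this sum is strictly less than $\Delta(q)/2$, then $\varepsilon_{\mathrm{score}}(q)<\Delta(q)/2$. Since the ideal winner is unique, $\Delta(q)>0$, and \Cref{thm:margin-preservation-streamlined} yields $\widehat c(q)=c^*(q)$ on the same event.

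I expect no real obstacle. The only step requiring care is the random threshold in $\mathcal B_{\mathrm{out}}$, which the conditioning argument above resolves.
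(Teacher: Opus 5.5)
Your proposal is correct and follows the paper's proof: both apply the context-sketch theorem and the output-interference theorem (conditionally on $(U,W)$) each at level $\delta/2$, combine them through the two-stage decomposition and a union bound, and finish with the margin-preservation theorem. Your tower-property treatment of the $(U,W)$-dependent threshold in $\mathcal{B}_{\mathrm{out}}(q,\delta/2)$ simply makes explicit what the paper means by ``for every fixed realization of $(U,W)$.''
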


\begin{proof}
Apply \Cref{thm:context-sketch} with failure probability $\delta/2$.
On the resulting event, $\max_c \vert G_c(q)-S_c(q) \vert \leq \mathcal{B}_{\mathrm{ctx}}\left(\frac{\delta}{2}\right)$.
For every fixed realization of $(U,W)$, \Cref{thm:output-code-interference}, again with failure probability $\delta/2$, gives $\max_c \vert \widetilde{S}_c(q) - G_c(q) \vert  \leq \mathcal{B}_{\mathrm{out}}\left(q,\frac{\delta}{2}\right)$.
A union bound and \cref{eq:two-stage-error-streamlined} prove
\cref{eq:complete-score-bound-streamlined}.
The prediction statement follows from \Cref{thm:margin-preservation-streamlined}.
\end{proof}

\Cref{thm:full-recovery-streamlined} is the main recovery theorem.
It separates the context error, controlled by $(d,d_h,M_{\max})$, from output interference, controlled by $d$ and the distribution of the class scores.

If the context sketch were exact, then $G_y(q) = S_y(q)$ and $\widehat{\mathcal{C}}_H(q) = \mathcal{C}_H(q) = \sum_{y\in\gV}S_y(q)^2$.
Intuitively, suppose that $M$ stored contexts all have Hamming similarity $\rho$ with the query.
If their labels are distinct, then $\mathcal C_H(q)=M\rho^2$, while if they all have the same label $\mathcal C_H(q)=M^2\rho^2$.
In general, if $k$ stored contexts share the same label $y$, their contributions are first aggregated into $S_y(q)=k\rho$. For any competing candidate $c\neq y$, they therefore produce the single cross-talk term $k\rho\langle z_c,z_y\rangle$, whose variance scales as $k^2\rho^2/d$, since $\operatorname{Var}(\langle z_c,z_y\rangle)=1/d$. This quadratic dependence on $k$ reflects the coherent accumulation of repeated labels.
Consequently, output interference depends on how the total score is distributed across labels, as captured by $\mathcal C_H(q)=\sum_y S_y(q)^2$, rather than on memory size alone.



\paragraph{A square-memory recovery condition.}
The output bound contains the realized collision energy
$\widehat{\mathcal C}_H(q)$.
It can be replaced by ideal, count-only quantities.
For readability, we give this explicit specialization under $d_h=d/h$.
Let $\ell_{\gD} = \bigl\vert \{y_j:1\leq j\leq M\} \bigr\vert$ be the number of distinct labels that actually occur in the memory\footnote{Observe that $\ell_\gD \leq n$}.

\begin{corollary}
\label{cor:deterministic-recovery-streamlined}
Assume the untied square-memory model, so that $h$ divides $d$ and $d_h=d/h$, and suppose that the ideal winner is unique.
Fix $\delta \in (0,1)$, and set $L = \ln\left(\frac{16n}{\delta}\right)$, $\alpha = 2\sqrt{\frac{L}{d}} + \frac{2L}{d}$, and $\gamma = \sqrt{2L(1+\alpha)/d}$.
If $d \geq c_0L$, we define
\[
\beta_{\mathrm{ctx}} = C_0 M_{\max}\biggl(\sqrt{\frac{hL}{d}}+\frac{hL}{d}\biggr), \qquad
\beta_{\mathrm{out}} = \left( \sqrt{\mathcal C_H(q)} + \sqrt{\ell_{\gD}} \beta_{\mathrm{ctx}}\right)(\alpha+\gamma),
\]
for some absolute constants $c_0, C_0 > 0$.
Then, with probability at least $1-\delta$, we have that $\varepsilon_{\mathrm{score}}(q) \leq \beta_{\mathrm{ctx}}+\beta_{\mathrm{out}}$.
Consequently, if the $\beta_{\mathrm{ctx}}+\beta_{\mathrm{out}} < \Delta(q)/2$, then $\widehat c(q)=c^*(q)$.
\end{corollary}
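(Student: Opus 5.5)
The plan is to apply \Cref{thm:full-recovery-streamlined} in the square-memory setting and then replace the realized energy $\widehat{\mathcal C}_H(q)$ by the ideal energy $\mathcal C_H(q)$ plus a correction controlled by the context-sketch bound.

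\textbf{Constants.} The probability levels are split so that the logarithms agree with $L=\ln(16n/\delta)$. For the context part, apply \Cref{thm:context-sketch} with failure probability $\delta_1=\delta/2$. Then $L_{\delta_1}=\ln(2n/\sqrt{\delta/2})$, and since $\sqrt{\delta/2}\geq \delta/8$ for $\delta\in(0,1)$, we get $L_{\delta_1}\leq \ln(16n/\delta)=L$. Because the expression $\sqrt{x/d}+x/d$ is monotone in $x$, we may replace $L_{\delta_1}$ by $L$ everywhere, and the hypothesis $d\geq c_0L$ implies $d\geq c_0L_{\delta_1}$. With $d_h=d/h\leq d$, the $d$-terms are dominated by the $d_h$-terms. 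Absorbing a factor $2$ into $C_0$ gives $\max_c|G_c(q)-S_c(q)|\leq \beta_{\mathrm{ctx}}$ for every $q$. For the output part, apply \Cref{thm:output-code-interference} with failure probability $\delta/4$. This gives $t=\ln(16n/\delta)=L$, so its $\alpha$ and the term $\sqrt{2t(1+\alpha)/d}$ are exactly the $\alpha$ and $\gamma$ of the statement. The total failure probability is $\delta/2+\delta/4<\delta$, with the union bound taken exactly as in \Cref{thm:full-recovery-streamlined}.

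\textbf{Controlling $\widehat{\mathcal C}_H(q)$.} This is the key step. First, if a label $y$ never occurs in $\gD$, the sum defining $G_y(q)$ is empty, so $G_y(q)=0=S_y(q)$. Hence the vector $(G_y(q)-S_y(q))_{y\in\gV}$ has at most $\ell_\gD$ nonzero entries, each bounded by $\beta_{\mathrm{ctx}}$ on the context event. By Minkowski's inequality in $\R^{n}$,
\[
\sqrt{\widehat{\mathcal C}_H(q)}\leq \sqrt{\mathcal C_H(q)}+\Bigl(\sum_{y}(G_y(q)-S_y(q))^2\Bigr)^{1/2}\leq \sqrt{\mathcal C_H(q)}+\sqrt{\ell_\gD}\,\beta_{\mathrm{ctx}}.
\]
Substituting this into $\mathcal B_{\mathrm{out}}$ gives $\max_c|\widetilde S_c(q)-G_c(q)|\leq \beta_{\mathrm{out}}$.

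\textbf{Conclusion.} On the intersection of the two events, the decomposition \cref{eq:two-stage-error-streamlined} gives $\varepsilon_{\mathrm{score}}(q)\leq\beta_{\mathrm{ctx}}+\beta_{\mathrm{out}}$. The recovery claim then follows from \Cref{thm:margin-preservation-streamlined}.

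\textbf{Main obstacle.} The only delicate point is the order of conditioning. The output bound holds conditionally on any fixed $(U,W)$, while the Minkowski replacement is valid only on the context event. This is handled exactly as in the proof of \Cref{thm:full-recovery-streamlined}. On the context event, the conditional output bound, after the Minkowski substitution, yields the claimed inequality. Its conditional failure probability is at most $\delta/4$ for every such $(U,W)$, so integrating over $(U,W)$ gives the stated overall probability.
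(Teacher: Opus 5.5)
Your proposal is correct and follows the paper's proof: the context-sketch theorem at level $\delta/2$, then the envelope $\sqrt{\widehat{\mathcal C}_H(q)}\leq\sqrt{\mathcal C_H(q)}+\sqrt{\ell_{\gD}}\,\beta_{\mathrm{ctx}}$ (valid because unobserved labels have $G_y(q)=S_y(q)=0$), then the output-interference theorem applied conditionally on $(U,W)$, and finally a union bound with \Cref{thm:margin-preservation-streamlined}. The only differences are bookkeeping: you give $\delta/4$ to the output step so that $t=L$ exactly (total failure $3\delta/4$), whereas the paper gives it $\delta/2$ and uses $t=\ln(8n/\delta)\leq L$ with monotonicity of $\alpha,\gamma$; you also make explicit the checks $L_{\delta/2}\leq L$ and the absorption of the factor $2$ into $C_0$, which the paper leaves implicit.
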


\begin{proof}
Apply \Cref{thm:context-sketch} with failure probability $\delta/2$, so that the context error is at most $\beta_{\mathrm{ctx}}$.
Only the $\ell_{\gD}$ observed labels have nonzero scores, hence
\begin{equation}
    \sqrt{\widehat{\mathcal C}_H(q)}
    =\norm{(G_y(q))_{y\in\gV}}_2
    \leq\sqrt{\mathcal C_H(q)}
      +\sqrt{\ell_{\gD}}\,\beta_{\mathrm{ctx}}.
    \label{eq:simple-collision-envelope-streamlined}
\end{equation}
We now apply \Cref{thm:output-code-interference} conditionally on $(U,W)$, with failure probability $\delta/2$.
Its tail parameter is at most $L$, so \cref{eq:simple-collision-envelope-streamlined} bounds the output error by $\beta_{\mathrm{out}}$.
Finally, a union bound and \cref{eq:two-stage-error-streamlined} prove the score bound.
The prediction claim follows from \Cref{thm:margin-preservation-streamlined}.
\end{proof}

\begin{remark}[Readable square-memory asymptotics]
\label{rem:asymptotic-recovery-streamlined}
Since $K_H\in[0,1]$, the ideal collision energy and the number of observed
labels satisfy $\mathcal C_H\leq M M_{\max}$, and $\ell_{\gD}\leq\min\{n,M\}$.
For $d\gg hL$, the two terms $\beta_{\mathrm{ctx}}$ and $\beta_{\mathrm{out}}$ in the bound have the asymptotic scales $\beta_{\mathrm{ctx}} = O\big( M_{\max}\sqrt{hL/d}\big)$, and $\beta_{\mathrm{out}} = O\big( \sqrt{M M_{\max}}\sqrt{L/d} + \sqrt{\ell_{\gD}}\,M_{\max}\sqrt{h}L/d \big)$.
The second term in $\beta_{\mathrm{out}}$ is the interaction between context and output distortions.
If $d \gg \ell_{\gD}L$, it is of lower order in the total error, and then the error simplifies as $\varepsilon_{\mathrm{score}}(q) = O\big( (M_{\max}\sqrt h+\sqrt{M M_{\max}}) \sqrt{L/d} \big)$.
In the worst case $M_{\max}=M$, this reduces to $\varepsilon_{\mathrm{score}}(q) = O\bigl(M\sqrt{hL/d}\bigr)$.
Ignoring constants, the corresponding 
recovery requirement is $d \gtrsim \frac{L(hM_{\max}^2+M M_{\max})}{\Delta(q)^2}$.

In this bound, the vocabulary size $n$ appears only inside a logarithm.
For a square memory, i.e., $d_h=d/h$, so $d$ grows linearly with $h$ if $d_h$ is kept fixed.
The term $\sqrt{M M_{\max}}$ reflects the fact that contexts with the same label add their contributions before output decoding.
Finally, the recovery condition is easier to satisfy when the ideal margin $\Delta(q)$ is larger, because the prediction is more robust to \MeMo's approximation errors.
\end{remark}

\section{Experimental Evaluation}

\subsection{Synthetic Validation}
\label{sec:synthetic-validation}
In this section we perform a first synthetic validation of the two-stage analysis of \Cref{sec:error-margin-streamlined}.
More precisely, we use a controlled synthetic experiment to test separately the context-sketch approximation, output-code interference, and end-to-end recovery.


\paragraph{Dataset construction.}
We set $n=1024$, $M=4096=4n$, and $h\in\{4,8,16\}$.
For each $h$, we construct a separate dataset with fixed query $q=(1,\ldots,h)$.
The first $64$ labels are active in $\gD$ (i.e., appear has next-token) and each has $64$ contexts, while the remaining $960$
labels have ideal score zero but remain possible output candidates.
For the distinguished label $c^*=c^*(q)=1$, exactly $40h$ of the $64h$ token-position entries are selected uniformly without replacement and matched to the corresponding query token.
For each other active label $c \neq c^*$, exactly $24h$ entries are matched.
Every remaining entry at position $r$ is drawn uniformly from $\gV\setminus\{q_r\}$, preventing accidental matches.
Hence, we have $S_{c^*}(q)=40$, and $S_c(q)=24$ for every other active label $c \neq c^*$, and the ideal margin is $\Delta(q)=40-24=16$. 
Scaling the planted match counts with $h$ keeps the ideal classification problem unchanged across the three context lengths.

\paragraph{Merged-label instance.}
To increase label repetition without changing the ideal margin, we construct a second labeling of the $h=8$ dataset.
The contexts remain unchanged, but the $64$ active labels are mapped into $16$ groups according to the map $c \mapsto 1+\bigl((c-1)\bmod 16\bigr)$.
Each merged label therefore contains $256$ contexts.
The group containing $c^*$ has score $40+3 \cdot 24 = 112$, whereas every other merged label has score $4\cdot24=96$.
Thus the margin $\Delta(q)$ remains $16$, while $M_{\max}$ increases from $64$ to $256$.
In this case the collision energy increases from $\mathcal C_H(q)=40^2+63 \cdot 24^2 = 37,888$ to $\mathcal C_H^{\mathrm{merged}}(q) = 112^2 + 15 \cdot 96^2 = 150,784$.
The two label assignments consequently have the same ideal winner and margin,
but different class loads and levels of output interference.

\paragraph{Experimental protocol.}
We use the square-memory choice $d_h = d/h$ and evaluate $d \in \{128, 256, 512, 1024, 2048, 4096\}$.
For each configuration, the dataset and query are fixed and $160$ independent Gaussian realizations are sampled.

The left panel of \Cref{fig:synthetic-validation} uses the original
$64$-label datasets and reports the quantity $\frac{\max_c|G_c(q)-S_c(q)|}{M_{\max}}$ for $h \in \{4,8,16\}$, thereby isolating the context-sketch error.
The center panel reports $\max_c|\widetilde S_c(q)-S_c(q)|$ for the original and merged assignments, thereby isolating output-decode interference.\footnote{Here we set $G(q)=S(q)$, so the decoder receives the exact ideal scores instead of the approximate context scores. This removes the context-sketch error and allows us to measure \emph{only} the output-code interference.
}
The right panel uses the complete untied
model and reports the fraction of trials satisfying
$\widehat{c}(q)=c^*$.
It includes the original datasets for all
three values of $h$ and the merged instance for $h=8$.
This is agreement with the ideal classifier, not the accuracy against an external ground truth.


\begin{figure}[htbp]
    \centering
    \includegraphics[width=.9\textwidth]{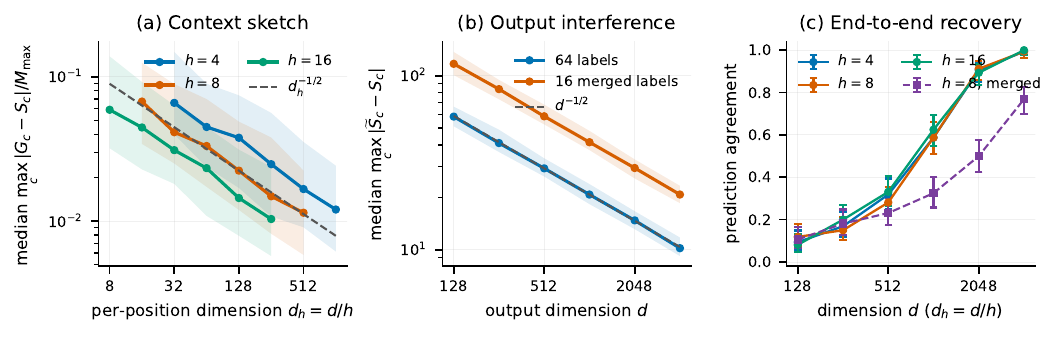}
    \caption{%
    Controlled synthetic validation with $n=1024$, $M=4096$, and $160$ independent realizations per point.
    Curves in the left and center panels show medians, with $10$th-$90$th percentile bands, while error bars on the right are pointwise Wilson $95\%$ intervals.
    \textbf{Left:} normalized context-sketch error for the original datasets with $h\in\{4,8,16\}$.
    \textbf{Center:} output-decoding error from the exact ideal scores for the original and merged assignments.
    \textbf{Right:} agreement of the complete untied model with the ideal classifier, including the merged $h=8$ instance.}
    \label{fig:synthetic-validation}
\end{figure}

\paragraph{Results.}
The left panel is consistent with the predicted $d_h^{-1/2}$ decay in the well-resolved regime.
In the center panel, merging the labels increases $\mathcal C_H(q)$ by approximately a factor of four and the typical output error by approximately a factor of two, consistently with the predicted $\sqrt{\mathcal C_H(q)/d}$ scale.
End-to-end agreement increases with $d$ and tends to one for the original assignment, while the merged instance recovers more slowly, despite exhibiting the same ideal margin.
These observations support the qualitative scalings predicted by the analysis, but do not constitute a real-world data benchmark.
This assessment is addressed in \Cref{sec:real-data-evaluation}.

\subsection{Real-Data Evaluation}
\label{sec:real-data-evaluation}

In this section we ask whether the same picture remains useful on real sequential data.
The experiment keeps three questions separate: (i) prediction of the true next token, (ii) agreement with the exact Hamming-kernel classifier, and (iii) computational cost.
The second quantity measures the quality of the random approximation, although it is not the same as accuracy against the observed next token.

\paragraph{Synthetic Dataset Construction.}
We use the standard \texttt{WikiText-2} training, validation, and test splits
\citep{merity2016pointer}.
Whitespace tokenization with an explicit \texttt{<eos>} marker gives a training vocabulary of $n = 33,278$ input tokens.
The task is restricted to the $64$ most frequent training targets\footnote{After excluding \texttt{<unk>}.}.
These targets cover $45.12\%$ of eligible training positions and $44.79\%$ of eligible test positions.
This is therefore a controlled next-token experiment with $64$-classes, not a full language-model benchmark.

We sample $100,000$ training, $2,000$ validation, and $5,000$ test positions with a fixed seed.
Every sampled position has at least $16$ preceding tokens on the same line.
The same positions considered are consequently used for $h \in \{4,8,16\}$, and no context crosses a line boundary.
All methods receive exactly the same examples. 
Finally, \MeMo uses the untied square construction $d_h=d/h$, dimensions $d \in \{128, 256, 512, 1024\}$, and five independent codebook seeds.

\paragraph{Compared methods.}
The \emph{sum prototype} is the exact classifier in \cref{eq:count-table-score}, implemented with token-position-label count tables rather than by linear scanning the training set.
The \emph{mean prototype} divides each score by its class size and therefore removes the empirical class frequency factor.
We also use a count-based suffix model that backs off from at most five preceding tokens, a linear SVM trained with stochastic hinge loss on the positional one-hot features $\Psi$, and exact Hamming $k$-nearest neighbors.
For the latter, $k \in \{1, 5, 25\}$ is chosen on validation data ($k=25$ is selected for all three context lengths).
Ties are resolved deterministically.
We report top-$1$ and top-$5$ accuracy, macro-F1, and \emph{agreement}\footnote{That is, the percentage of queries for which a method predicts the same token as the exact Hamming classifier.} with the sum-prototype winner.
Since the ideal winner is unique on at least $99.34\%$ of the test queries, agreement is reported on that unambiguous subset.
We emphasize that the agreement with the \emph{exact sum prototype is our primary measure} of how faithfully \MeMo implements the positional Hamming-kernel classifier, independently of that classifier's accuracy against the observed next token.

\begin{table}[htbp]
    \centering
    \small
    \caption{\texttt{WikiText-2} results for $h=8$.
    All entries, except agreement, are measured against the observed next token.
    Agreement is with the exact sum-prototype prediction on queries having a unique ideal winner.
    \MeMo values are mean $\pm$ standard deviation over five codebook seeds.}
    \label{tab:real-data-results}
    \begin{tabular}{lrrr|r}
        \toprule
        Method & Top-1 & Top-5 & Macro-F1 & Agreement \\
        \midrule
        Sum prototype             & $22.30$ & $53.44$ & $3.36$  & $100.00$ \\
        Mean prototype            & $9.84$  & $29.58$ & $6.68$  & $9.87$ \\
        Suffix backoff            & $31.96$ & $63.32$ & $19.15$ & $37.60$ \\
        Linear SVM                & $34.28$ & $60.96$ & $18.80$ & $38.06$ \\
        Hamming $25$-NN           & $25.66$ & $55.62$ & $9.01$  & $53.54$ \\
        \midrule
        \MeMo, $d=128$             & $16.35\pm0.79$ & $40.48$ & $2.63$ & $47.22\pm6.07$ \\
        \MeMo, $d=256$             & $18.76\pm1.27$ & $46.35$ & $2.94$ & $54.70\pm3.16$ \\
        \MeMo, $d=512$             & $20.18\pm0.40$ & $48.66$ & $3.23$ & $63.46\pm1.89$ \\
        \MeMo, $d=1024$            & $21.50\pm0.76$ & $50.99$ & $3.30$ & $72.79\pm4.55$ \\
        \bottomrule
    \end{tabular}
\end{table}

\begin{figure}[htbp]
    \centering
    \includegraphics[width=.9\textwidth]{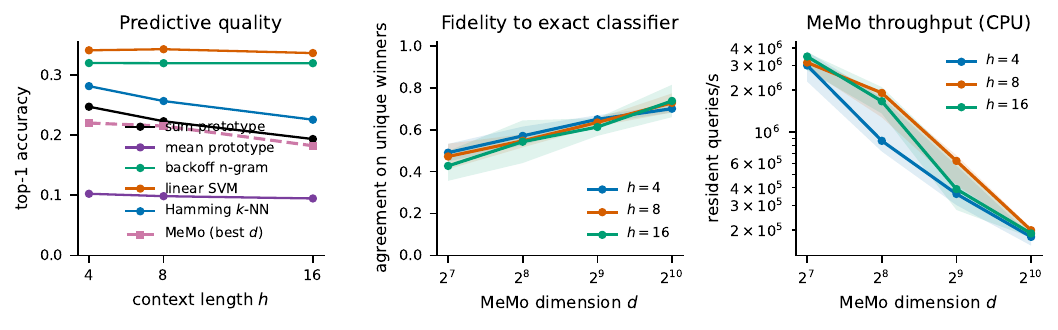}
    \caption{Real-data results on \texttt{WikiText-2} dataset.
    \textbf{Left:} top-$1$ accuracy of the five baselines and the best observed \MeMo dimension for each $h$.
    \textbf{Center:} \MeMo agreement with the exact sum prototype on queries with a unique ideal winner. The curves show means over five codebook seeds, and bands show the standard deviation.
    \textbf{Right:} number of queries per second by \MeMo on an Intel Core i7-13700H CPU, for different values of $d$ and $h$. Queries are processed in batches of $512$ and loaded into memory before timing.}
    \label{fig:real-data-benchmark}
\end{figure}

\paragraph{Results.}
The linear SVM gives the highest top-$1$ accuracy, followed closely by the suffix model.
This is not a contradiction: these baselines learn a discriminative rule or preserve token conjunctions, whereas the ideal single-layer \MeMo classifier is an additive, count-weighted Hamming rule.
The poor top-$1$ accuracy of the mean prototype also shows that class frequency is an important part of the exact sum-prototype prediction on this task.

For \MeMo, increasing $d$ consistently improves fidelity to the exact classifier (the full curves, are reported in the \Cref{fig:real-data-benchmark}).
At $h=8$, mean agreement rises from $47.22\%$ at $d=128$ to $72.79\%$ at $d=1024$.
At the latter dimension, top-$1$ accuracy is $21.50\%$, only $0.80$ percentage points below the exact sum prototype (\Cref{tab:real-data-results}).
The deployed two-stage \MeMo state (the projected input codes, CMM, and output codes) occupies $20.50$ MiB, compared with $65.00$ MiB for the dense exact count table, about $68\%$ less memory (\Cref{tab:gpu-inference}).
At $d=512$, \MeMo uses $9.25$ MiB and obtains $20.18\%$ top-1 accuracy and $63.46\%$ agreement.
Thus stronger compression gives a visible but gradual loss of fidelity rather than an abrupt failure.


\paragraph{GPU inference.}
We start observing that the two matrix products in \MeMo's retrieval rule can be combined before inference.
If $Z$ has the output codes as columns, then we can write $\widetilde{S}(q) = \phi(q)^{\top}(CZ)$.
The matrix $CZ$ is computed once, so each query needs only one matrix product after its features are formed.
We will refer to this implementation as \emph{fused} \MeMo, and gives the same scores as the two-stage retrieval (up to floating-point rounding).
We empathize that it does not remove the output-code interference analyzed above.

We benchmark both versions on an NVIDIA GeForce RTX 4060 Laptop GPU at $h=8$, using the same device for the sum prototype, mean prototype, and linear SVM.
The inputs and model state are already loaded on the GPU.
Each measurement includes the time needed to compute the scores and choose a label on the GPU, but excludes training, data transfers, and Python overhead.
We cycle through batches of test queries.
We run $20$ batches to warm up the GPU, then time $100$ batches. We repeat this three times and report the median result.
The state sizes in \Cref{tab:gpu-inference} count the tensors required for inference, not peak GPU allocation.

\begin{table}[htbp]
    \centering
    \small
    \caption{Inference on the RTX 4060 Laptop GPU at $h=8$.
    Throughput is in millions of queries per second (Mq/s), while $B$ is batch size.
    }
    \label{tab:gpu-inference}
    \begin{tabular}{lrrrr}
        \toprule
        Method & State (MiB) & $B=1$ ($\mu$s) & $B=256$ (Mq/s) & $B=1024$ (Mq/s) \\
        \midrule
        Sum prototype                  & $65.00$ & $18.59$ & $11.36$ & $33.33$ \\
        Mean prototype                 & $65.00$ & $22.66$ & $9.62$  & $31.08$ \\
        Linear SVM                     & $65.00$ & $26.35$ & $8.33$  & $29.41$ \\
        \MeMo fused, $d=128$            & $2.06$  & $20.22$ & $12.56$ & $48.41$ \\
        \MeMo fused, $d=256$            & $4.12$  & $20.21$ & $11.90$ & $40.00$ \\
        \MeMo fused, $d=512$            & $8.25$  & $20.19$ & $11.90$ & $27.78$ \\
        \MeMo fused, $d=1024$           & $16.50$ & $20.16$ & $9.34$  & $16.37$ \\
        \MeMo two-stage, $d=1024$       & $20.50$ & $25.31$ & $2.91$  & $3.21$  \\
        \bottomrule
    \end{tabular}
\end{table}

At batch size $1024$, fused \MeMo with $d=256$ processes $40.00$ million queries per second, compared with $33.33$ million for the exact sum prototype, while using $4.12$ MiB rather than $65.00$ MiB.
Its mean top-$1$ accuracy is $18.76\%$, versus $22.30\%$ for the exact classifier.
With $d=512$, accuracy rises to $20.18\%$ and the state remains $7.9$ times smaller, but throughput falls to $27.78$ million queries per second.
At $d=1024$, \MeMo comes closer to the exact prediction but becomes slower still.
The literal two-stage implementation is substantially slower at this dimension, which shows the practical value of the algebraic fusion when only class scores are needed.
The linear SVM has the highest predictive accuracy ($34.28\%$), so \MeMo does not dominate the alternatives on every metric.

The CPU measurements in \Cref{fig:real-data-benchmark}, and the GPU measurements above, describe different hardware and should not be compared as a speedup.

Overall, \MeMo can offer a favorable trade-off between accuracy, memory, and throughput thanks to a compressed representation and highly parallelizable matrix operations. 
The present experiment illustrates that trade-off, but does not establish universal superiority over either the exact count table or a discriminatively trained classifier.

\bibliography{reference}
\bibliographystyle{iclr2027_conference}

\end{document}

%% file: math_commands.tex
\usepackage{amsmath,amsfonts,bm,amssymb,amsthm}

\def\eqref#1{equation~\ref{#1}}

\def\1{\bm{1}}

\DeclareMathAlphabet{\mathsfit}{\encodingdefault}{\sfdefault}{m}{sl}
\SetMathAlphabet{\mathsfit}{bold}{\encodingdefault}{\sfdefault}{bx}{n}

\def\gD{{\mathcal{D}}}

\def\gV{{\mathcal{V}}}

\newcommand{\E}{\mathbb{E}}

\newcommand{\R}{\mathbb{R}}

\newcommand{\Prob}{\mathbb{P}}
\newcommand{\ind}{\mathbf{1}}
\newcommand{\ip}[2]{\left\langle #1,#2\right\rangle}
\newcommand{\norm}[1]{\left\lVert #1\right\rVert}

\usepackage{xspace}
\newcommand{\MeMo}{\textsc{MeMo}\xspace}

\usepackage[hidelinks]{hyperref}
\usepackage[nameinlink,noabbrev]{cleveref}

\newtheorem{theorem}{Theorem}

\newtheorem{lemma}{Lemma}
\newtheorem{corollary}[theorem]{Corollary}
\theoremstyle{definition}

\newtheorem{problem}[theorem]{Problem}
\theoremstyle{remark}
\newtheorem{remark}[theorem]{Remark}

\crefname{lemma}{lemma}{lemmas}
\Crefname{lemma}{Lemma}{Lemmas}